\def\eqref#1{equation~\ref{#1}}
\def\1{\bm{1}}
\DeclareMathAlphabet{\mathsfit}{\encodingdefault}{\sfdefault}{m}{sl}
\SetMathAlphabet{\mathsfit}{bold}{\encodingdefault}{\sfdefault}{bx}{n}
\definecolor{orange}{rgb}{1,0.5,0}
\definecolor{darkgreen}{rgb}{0,0.5,0}
\newtheorem*{rep@theorem}{\rep@title}
\newcommand{\newreptheorem}[2]{%
\newenvironment{rep#1}[1]{%
 \def\rep@title{#2 \ref{##1}}%
 \begin{rep@theorem}}%
 {\end{rep@theorem}}}
\newtheorem{prop}{Proposition}
\title{Efficiently Disentangle Causal Representations}
\author{
Yuanpeng Li\thanks{Corresponding author: \texttt{yuanpeng16@gmail.com}}, Joel Hestness, Mohamed Elhoseiny, Liang Zhao \& Kenneth Church
}
\begin{document}

\maketitle

\begin{abstract}
This paper proposes an efficient approach to learning disentangled representations with causal mechanisms based on the difference of conditional probabilities in original and new distributions.
We approximate the difference with models' generalization abilities so that it fits in the standard machine learning framework and can be computed efficiently.
In contrast to the state-of-the-art approach, which relies on the learner's adaptation speed to new distribution, the proposed approach only requires evaluating the model's generalization ability.
We provide a theoretical explanation for the advantage of the proposed method, and our experiments show that the proposed technique is 1.9--11.0$\times$ more sample efficient and 9.4--32.4$\times$ quicker than the previous method on various tasks.
The source code is available at \url{https://github.com/yuanpeng16/EDCR}.



\end{abstract}


\section{Introduction}
Causal reasoning is a fundamental tool that has shown significant impact in different disciplines~\citep{rubin2006estimating,ramsey2010six,rotmensch2017learning,scholkopf2021toward}, and it has roots in work by David Hume in the eighteenth century~\citep{hume2003treatise} and classical AI~\citep{pearl2003causality}. 
Causality has been mainly studied from a statistical perspective~\citep{pearl2009causality,peters2016causal,
greenland1999causal,pearl2018does} with Judea Pearl's work on the causal calculus leading its statistical development.

More recently, 
there has been a growing interest in integrating statistical techniques into machine learning to leverage their benefits.
Welling raises a particular question about how to disentangle correlation from causation in machine learning settings to take advantage of the sample efficiency and generalization abilities of causal reasoning~\citep{welling2015ml}.
Although machine learning has achieved important results on a variety of tasks like computer vision and games over the past decade (e.g., \cite{mnih2015human,silver2017mastering,szegedy2017inception,hudson2018compositional}), current approaches can struggle to generalize when the test data distribution is much different from the training distribution (common in real applications). Further, these successful methods are typically ``data-hungry'', requiring an abundance of labeled examples to perform well across data distributions.
%
In statistical settings, encoding the causal structure in models has been shown to have significant efficiency advantages.
%

In support of the advantages of encoding causal mechanisms, \cite{bengio2020a} recently introduced an approach to disentangling causal representations in end-to-end machine learning by comparing the adaptation speeds of separate models that encode different causal structures.
%
With this as the baseline, in this paper, we propose a more efficient approach to learning disentangled representations with causal mechanisms based on the difference of conditional probabilities in original and new distributions.
The key idea is to approximate the difference with models' generalization abilities so that it fits in the standard machine learning framework and can be efficiently computed.
In contrast to the state-of-the-art baseline approach, which relies on the learner's adaptation speed to new distribution, the proposed approach only requires evaluating the generalization ability of models.


Our method is based on the same assumption as the baseline that the conditional distribution $P(B|A)$ does not change between the train and transfer distribution. This assumption can be explained with an atmospheric physics example of learning $P(A,B)$, where $A$ (\textbf{Altitude}) causes $B$ (\textbf{Temperature})~\citep{peters2016causal}. The marginal probability of $A$ can be changed depending on, for instance, the place (Switzerland to a less mountainous country like the Netherlands), but $P(B|A)$ as the underlying causality mechanism does not change.
Therefore, we can infer the causality from the robustness of predictive models on out-of-distribution~\citep{peters2016causal,peters2017elements} and further learn causal representations (Section~\ref{sec:representation_learning}).




%
The proposed method improves efficiency by omitting the adaptation process and is more robust when the marginal distribution is complicated.
We provide a theoretical explanation and experimental verification for the advantage of the proposed method.
Our experiments show the proposed technique is 1.9--11.0$\times$ more sample efficient and 9.4--32.4$\times$ quicker than measuring adaptation speed on various tasks.
We also argue that the proposed approach reduces the required hyperparameters, and it is straightforward to implement the approach within the standard machine learning workflows.

Our contributions can be summarized as follows.
\begin{itemize}
    \item We propose an efficient approach to disentangling representations for causal mechanisms by measuring generalization.
    \item We theoretically prove that the proposed estimators can identify the causal direction and disentangle causal mechanisms.
    \item We empirically show that the proposed approach is significantly quicker and more sample-efficient for various tasks. Sample efficiency is important when the data size in transfer distribution is small.
\end{itemize}

\section{Approach}
To begin, we reflect on the tasks and the disentangling approach (as baseline) described by previous work~\citep{bengio2020a}.
The invariance of conditional distribution for the correct causal direction $P(B|A)$ is the critical assumption in the work, and we also follow it in this work.
We notice that the baseline approach compares the adaptation speed of models on a transfer data distribution and hence requires significant time for adaptation.
We propose an approach to learn causality mechanisms by directly measuring the changes in conditional probabilities before and after intervention for both $P(B|A)$ and $P(A|B)$. Further, we optimize the proposed approach to use generalization loss rather than a divergence metric---because loss can be directly measured in standard machine learning workflows---and we show that it is likely to predict the causal direction and disentangle causal mechanisms correctly.

\subsection{Causality Direction Prediction}
\label{sec:causality_direction}
We start with the binary classification task as the first step towards learning disentangled representations for causal mechanisms.
Given two discrete variables $A$ and $B$, we want to determine whether $A$ causes $B$ or vice-versa.
We assume noiseless dynamics, and $A$ and $B$ do not have hidden confounders.

The training (transfer) data contains samples $(a, b)$ from training (transfer) distribution, $P_1$ ($P_2$).
The baseline approach defines models that factor the joint distribution $P(A, B)$ into two causality directions $P_{A \rightarrow B}(A, B)=P_{A \rightarrow B}(B|A)P_{A \rightarrow B}(A)$ and $P_{B \rightarrow A}(A, B)=P_{B \rightarrow A}(A|B)P_{B \rightarrow A}(B)$. It then compares their speed of adaptation to transfer distribution.

Intuitively, the factorization with the correct causality direction should adapt more quickly to the transfer distribution. Suppose $A \rightarrow B$ is the ground-truth causal direction. For the correct factorization, conditional distribution $P_{A \rightarrow B}(B|A)$ is assumed not to change between the train and transfer distributions so that only the marginal $P_{A \rightarrow B}(A)$ needs adaptation. In contrast, for the factorization with incorrect causality direction, both the conditional $P_{B \rightarrow A}(A|B)$ and marginal distributions $P_{B \rightarrow A}(B)$ need adaptation.
It is analyzed that updating a marginal distribution $P(A)_{A \rightarrow B}$ is likely to have lower sample complexity than the conditional distribution $P_{B \rightarrow A}(A|B)$ (only a part of $P_{B \rightarrow A}(A, B)$) because the latter has more parameters.
Therefore, the model with correct factorization will adapt more quickly, and causality direction can be predicted from adaptation speed.

To leverage this observation, the baseline method defines a meta-learning objective. Let $\mathcal{L}_{A \rightarrow B}$ and $\mathcal{L}_{B \rightarrow A}$ be the log-likelihood of $P_{A \rightarrow B}(A,B)$ and $P_{B \rightarrow A}(A,B)$, respectively.
It optimizes a regret $\mathcal{R}$ to acquire an indicator variable $\gamma \in \mathbb{R}$. If $\gamma > 0$, the prediction is $A \rightarrow B$, otherwise $B \rightarrow A$. 
\begin{align}
\label{equation:baseline_loss}
    & \mathcal{R} = -\log[\sigma(\gamma)\mathcal{L}_{A \rightarrow B} + (1-\sigma(\gamma))\mathcal{L}_{B \rightarrow A}]
\end{align}
$\sigma$ is a sigmoid function $\sigma(x) = 1/(1 + \exp(-x))$.

\subsection{Proposed Mechanism: Observe the Model's Conditional Distribution Divergence}
Rather than relying on gradients of end-to-end predictions to identify the causal direction, we propose to directly observe the divergence of each model's conditional distribution predictions under the intervention of the transfer dataset.

We consider the distribution of a single model's predictions on the training and transfer distributions. These distributions depend on both the marginal and conditional distributions learned by the models. It is expected that a model encoding the correct causal direction will have learned a correct conditional distribution, but the marginal distribution might shift from the training to transfer datasets. Hence, we would like to ignore changes in the model's predictions that are due to marginal distribution differences and focus directly on the conditional distribution.
 
Given access to the model's conditional distribution predictions, we could use the KL-divergence to directly measure the conditional distribution differences between the training and transfer datasets. A model that encodes the correct causal structure should show no change in its conditional distribution predictions, so the divergence of these predictions should be slight. On the other hand, a model that encodes the incorrect causal structure is likely to show a significant divergence. We tie these observations together in the following proposition:
\begin{repprop}{prop:kl_unbiased}
Given two data distributions with the same directed causality between two random variables, $A$ and $B$, the difference of the KL-divergences of their conditional distributions is an unbiased estimator of the correct causality direction between $A$ and $B$.
\end{repprop}

This approach shares the same intuition as the baseline approach. The model with correct causal direction should only witness minimal changes in the conditional distribution of its predictions between train and transfer distributions. Thus, the KL-divergence of the predictions should not change for models with correct causal structure. The proof for Proposition~\ref{prop:kl_unbiased} is included in Appendix~\ref{sec:proof_of_kl}.

\subsection{Proposed Simplification: Generalization Loss Approximates the Divergence}
Although we can use the KL-divergence of the conditional probabilities of two data distributions as an unbiased estimator of causality direction, it requires additional computation to model conditional distribution in transfer distribution. However, in many practical end-to-end learning settings, it is more efficient and straightforward to acquire generalization loss, such as the cross-entropy, that can be used to approximate the conditional KL-divergence.

More specifically, we compare the generalization gaps between two causal models to approximate the causality direction. Let the generalization gap, $\mathcal{G}$, be the difference of a model's losses on the training and transfer datasets: $\mathcal{G}_{A \rightarrow B} = \mathcal{L}^{\text{transfer}}_{A \rightarrow B} - \mathcal{L}^{\text{train}}_{A \rightarrow B}$.
Here, $\mathcal{L}^{~\cdot~}_{A \rightarrow B}$ is the loss on the specified set. Further, we define a directionality score as the difference of generalization gaps:
\begin{align*}
    \mathcal{S_G} &= \mathcal{G}_{B \rightarrow A} - \mathcal{G}_{A \rightarrow B}
\end{align*}
We show that for an appropriately chosen loss, such as cross-entropy, $\mathcal{S_G}$ is a biased but reasonable estimator of the correct causal direction. When $\mathcal{S_G} > 0$, $A\rightarrow B$ is likely the correct causal direction--the generalization gap for the incorrect causal model dominates the score.
We formalize this notion in Proposition~\ref{prop:gen_loss_approx_kl} in Appendix~\ref{sec:proof_of_approximation}:
\begin{repprop}{prop:gen_loss_approx_kl}
Given two data distributions with the same directed causality between two random variables, $A$ and $B$, the difference of their generalization gaps is a biased estimator of the correct causality direction between $A$ and $B$.
\end{repprop}

In practice, for the tasks we test in the next section, we find that although $\mathcal{S_G}$ is biased, it always indicates the correct causal direction.
An intuitive understanding is that this approach measures how well models trained on train distribution can predict in transfer distribution--their generalization ability.
Algorithm~\ref{alg:causality_direction} summarizes the process for identifying a correct causal model.
Appendix~\ref{sec:gen_more_practical} describes that the generalization-based approach should converge more quickly than a gradient-based approach, especially in practical settings.

\begin{figure}[ht]
  \centering
  \begin{minipage}{.75\linewidth}
    \begin{algorithm}[H]
      \caption{The proposed approach for causality direction prediction.}
      \begin{algorithmic}[1]
        \STATE Train $f_{A \rightarrow B}$ on training data, and get train loss $\mathcal{L}^{\text{train}}_{A \rightarrow B}$.
        \STATE Train $f_{B \rightarrow A}$ on training data, and get train loss $\mathcal{L}^{\text{train}}_{B \rightarrow A}$.
        \STATE Get transfer loss $\mathcal{L}^{\text{transfer}}_{A \rightarrow B}$ with $f_{A \rightarrow B}$ on transfer data.
        \STATE Get transfer loss $\mathcal{L}^{\text{transfer}}_{B \rightarrow A}$ with $f_{B \rightarrow A}$ on transfer data.
        \STATE Get generalization loss $\mathcal{G}_{A \rightarrow B}=\mathcal{L}^{\text{transfer}}_{A \rightarrow B}-\mathcal{L}^{\text{train}}_{A \rightarrow B}$.
        \STATE Get generalization loss $\mathcal{G}_{B \rightarrow A}=\mathcal{L}^{\text{transfer}}_{B \rightarrow A}-\mathcal{L}^{\text{train}}_{B \rightarrow A}$.
        \STATE Compute $\mathcal{S_G} = \mathcal{G}_{B \rightarrow A} - \mathcal{G}_{A \rightarrow B}$.
        \STATE If $\mathcal{S_G} > 0$ return $A \rightarrow B$, else return $B \rightarrow A$.
      \end{algorithmic}
        \label{alg:causality_direction}
    \end{algorithm}
  \end{minipage}
\end{figure}

The baseline and proposed approaches are based on the same intuition of stable conditional distributions for correct causality direction. However, the baseline approach must observe gradients during transfer distribution training, while the proposed approach looks directly at the changes in model outputs on the transfer data distribution. Informally, we note that these approaches are roughly equivalent by observing the following: for the correct causal direction,
\begin{align*}
    & \mathcal{G}_{\cdot \rightarrow \cdot} = \mathcal{L}^{\text{transfer}}_{\cdot \rightarrow \cdot} - \mathcal{L}^{\text{train}}_{\cdot \rightarrow \cdot} &&
    \nabla \mathcal{G}_{\cdot \rightarrow \cdot} = \nabla \mathcal{L}^{\text{transfer}}_{\cdot \rightarrow \cdot} - \nabla \mathcal{L}^{\text{train}}_{\cdot \rightarrow \cdot}
\end{align*}
Since $\nabla \mathcal{L}^{\text{train}} = 0$, $\nabla \mathcal{G}_{\cdot \rightarrow \cdot}=\nabla \mathcal{L}^{\text{transfer}}_{\cdot \rightarrow \cdot}$.
The previous work shows that $\nabla_{\theta_i} \mathcal{L}^{\text{transfer}}_{\cdot \rightarrow \cdot} = 0$ for model parameters, $\theta_i$, representing correct causal structures (i.e., joint distribution), so $\nabla \mathcal{L}^{\text{transfer}}_{\cdot \rightarrow \cdot}$ should also be small.
Similar to that, we expect $\mathcal{G}_{\cdot \rightarrow \cdot}$ to be small relative to the incorrect causal model.

\subsection{Representation Learning}
\label{sec:representation_learning}
So far, we assume that the causal variables are already disentangled in the raw inputs. We are interested in extending it to situations where the variables are entangled in input, such as pixels or sounds. Here, we want to learn representations that disentangle causal variables.

Following the previous work, we suppose the true causal variables $(A, B)$ generate input observation $(X, Y)$ with ground truth decoder, $\mathcal{D}$.
We want to train an encoder, $\mathcal{E}$, that converts input $(X, Y)$ to a hidden representation, $(U, V)$.
The decoder and encoder are rotation matrices.
\begin{align*}
    & \begin{bmatrix} X \\ Y \end{bmatrix} = R(\theta_{\mathcal{D}}) \begin{bmatrix} A \\ B \end{bmatrix} && \begin{bmatrix} U \\ V \end{bmatrix} = R(\theta_{\mathcal{E}}) \begin{bmatrix} X \\ Y \end{bmatrix}
\end{align*}
Then, we treat $(U, V)$ as observed input and build causal modules on them in a way similar to the causality direction prediction problem  (Section~\ref{sec:causality_direction}).
If the encoder is learned correctly, we expect to obtain $(U,V)=(A,B)$ with $\theta_\mathcal{E}=-\theta_\mathcal{D}$, or $(U,V)=(B,A)$ with $\theta_\mathcal{E}=\theta_\mathcal{D}$.
The baseline approach addresses this problem with the loss function in Equation~(\ref{equation:baseline_loss}) by extending meta-parameters to encoder parameters in the meta-learning framework.


We apply our method to learn representation without using the adaptation process.
Suppose the conditional distributions $P(V|U)$ is modeled by $f_{U \rightarrow V}$, and $P(U|V)$ by $f_{V \rightarrow U}$.
Also, the original losses are $\mathcal{L}_{U \rightarrow V}$ for $f_{U \rightarrow V}$ and $\mathcal{L}_{V \rightarrow U}$ for $f_{V \rightarrow U}$ on train data.
We optimize the following objective.
\begin{align*}
    & \mathcal{L} = \mathcal{L}_{U \rightarrow V} + \mathcal{L}_{V \rightarrow U} + \lambda \min\{\mathcal{G}_{U \rightarrow V}, \mathcal{G}_{V \rightarrow U}\}
\end{align*}
$\lambda$ is a hyperparameter for interpolation.
Note that we compute $\mathcal{L}_{U \rightarrow V}, \mathcal{L}_{V \rightarrow U}$ with train data, and $\mathcal{G}_{U \rightarrow V}, \mathcal{G}_{V \rightarrow U}$ with transfer data. When we use mini-batches, this means that there are two types of mini-batches for data from train and transfer datasets, respectively. Train data is used to learn $f_{U \rightarrow V}$ and $f_{V \rightarrow U}$, and transfer data is used to learn encoder $\theta_\mathcal{E}$. Please see Algorithm~\ref{alg:representation_learning} for details.

\begin{algorithm}[ht]
    \caption{The proposed approach for representation learning.} 
  \begin{algorithmic}[1]
    \STATE Initialize all parameters.
    \FOR{each iteration}
      \STATE Update predictor parameters with train samples, and loss $\mathcal{L}_{U \rightarrow V} + \mathcal{L}_{V \rightarrow U}$.
      \STATE Update encoder parameters with transfer samples, and loss $\lambda \min\{\mathcal{G}_{U \rightarrow V}, \mathcal{G}_{V \rightarrow U}\}$.
    \ENDFOR
  \end{algorithmic}
    \label{alg:representation_learning}
\end{algorithm}

An intuitive understanding is that we want to learn a representation that works well in both causal directions in train distribution and at least one causal direction in transfer data distribution.
Therefore, the models are trained well in train distribution, and the representation recovers causal variables because it works on both distributions in at least one causality direction.

\section{Experiments}
We run experiments to show that predicting the causal direction and disentangling causal representations can be more efficient with the proposed approach.
For a fair comparison, we follow the same experimental setup as the baseline~\citep{bengio2020a}.

\subsection{Causality Direction Prediction}

\begin{figure*}[!th]
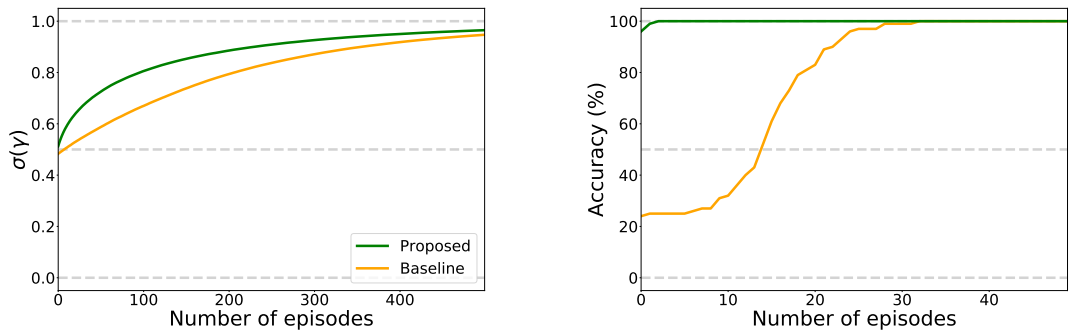

  \centering
  \subfloat[
  $\sigma(\gamma)$  (vertical) on transfer data with the number of episodes (horizontal).
  ]{
    \includegraphics[width=0.48\textwidth]{figs/direction_gamma_N=10.pdf}
    \label{fig:direction_meta_gamma}
  }
  \hfill
  \subfloat[
  Accuracy (vertical) on transfer data with the number of episodes (horizontal).
  ]{
    \includegraphics[width=0.48\textwidth]{figs/direction_acc_N=10.pdf}
    \label{fig:direction_meta_acc}
  }
  \caption{
  Experiments to evaluate efficiency in causality direction prediction. The model is discrete, and both $A$ and $B$ are ten-dimensional variables ($N=10$).
  The proposed approach is \textcolor{darkgreen}{green}.
  The baseline approach~\citep{bengio2020a} is \textcolor{orange}{orange}.
  }
\end{figure*}

We first compare the approaches in the same meta-learning setting.
For the proposed approach, we replace the log-likelihood of joint probabilities with the generalization loss.
The overall loss for the proposed approach $\mathcal{R}'$ is similar to Equation (1) with $\mathcal{L}_{A \rightarrow B} = \mathcal{G}_{A \rightarrow B}$ and $\mathcal{L}_{B \rightarrow A} = \mathcal{G}_{B \rightarrow A}$.
\begin{align*}
    & \mathcal{R}' = -\log[\sigma(\gamma)\mathcal{G}_{A \rightarrow B} + (1-\sigma(\gamma))\mathcal{G}_{B \rightarrow A}]
\end{align*}
The first case we consider is the discrete model with both $A$ and $B$ being ten-dimensional variables ($N=10$). For other cases, please refer to Appendix~\ref{sec:more_experiments}.
The model architecture, the same as in the baseline,
has four modules: $P_{A \rightarrow B}(A), P_{A \rightarrow B}(B|A), P_{B \rightarrow A}(B), P_{B \rightarrow A}(A|B)$.
The baseline uses all of them, and the proposed approach only uses the conditionals.
We parameterize these probabilities via Softmax of unnormalized tabular quantities and train the modules independently with a maximum likelihood estimator.
We then predict the causality direction on transfer data with each approach.

For each approach, we fix the ground-truth causal direction as $A \rightarrow B$, and run 100 experiments with different random seeds. We plot the mean value of $\sigma(\gamma)$ in Figure~\ref{fig:direction_meta_gamma}. The proposed method  (\textcolor{darkgreen}{green}) is above the baseline method (\textcolor{orange}{orange}).
This comparison shows that using generalization loss is more efficient than the log-likelihood of joint probabilities.
Especially in the first several steps, the proposed approach moves toward the correct $\gamma$ more quickly than the baseline approach. This quicker convergence is important when the transfer data is expensive.
We also observe that for the baseline method, $\sigma(\gamma)$ is less than 0.5 for the first several steps, which might be because the factorization with incorrect causality direction has more parameters to be updated, leading to faster adaptation.

Since the proposed approach does not need the adaptation process, it does not need to update $\sigma(\gamma)$ with gradients.
Instead, we can calculate a score, $\mathcal{S_G} = \mathcal{G}_{B \rightarrow A} - \mathcal{G}_{A \rightarrow B}$, to detect causality direction.
This score is not directly comparable with $\sigma(\gamma)$ in the baseline approach because they have different scales and semantics. To compare them, we reformulate the causality direction prediction problem as a binary classification problem and evaluate the accuracy of the classification.
For each approach, we run 100 experiments with the same setting as above.
We then compute the accuracy as the number of experiments with successful prediction over all experiments.
For the baseline approach, we count an experiment as successful when $\gamma > 0$.
For the proposed approach, we count when $\mathcal{S_G} > 0$.

We plot accuracy along with the number of iterations to compare sample efficiency as shown in Figure~\ref{fig:direction_meta_acc}.
The experimental result demonstrates that the proposed approach (\textcolor{darkgreen}{green}) achieves 100\% accuracy after just 3 iterations with 5.1 ms, 
while the baseline approach (\textcolor{orange}{orange}) takes 33 iterations to achieve 100\% accuracy with 165.1 ms.

Since our method only requires estimating a scale value of loss from transfer data, it is more sample efficient than the original method that requires learning/updating all parameters in a model during adaptation. The experiments show that our method is 11.0 times sample efficient and 32.4 times quicker compared with the baseline approach.
Also, the baseline accuracy for the first several steps is less than 50\%. This is consistent with the previous observation that the $\sigma(\gamma) < 0.5$ for the first several steps.

\subsection{Representation Learning}

\begin{figure*}[!th]
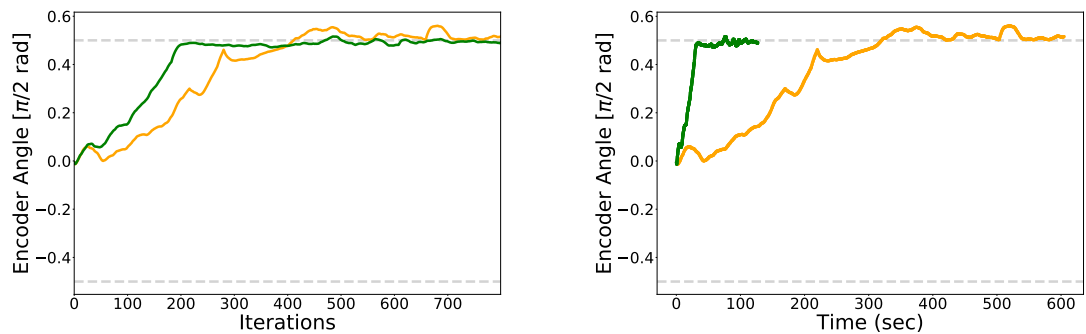

  \centering
  \subfloat[
  $\theta$ values ($\times \pi / 2$) with number of iterations.
  ]{
    \includegraphics[width=0.48\textwidth]{figs/representation_sample_gpu.pdf}
    \label{fig:representation_samples}
  }
  \hfill
  \subfloat[
  $\theta$ values ($\times \pi / 2$) with computation time in seconds.
  ]{
    \includegraphics[width=0.48\textwidth]{figs/representation_time_gpu.pdf}
    \label{fig:representation_time}
  }
  \caption{Experiments to evaluate efficiency in representation learning with the same setting. The curve of the proposed approach (\textcolor{darkgreen}{green}) is above that of the baseline approach (\textcolor{orange}{orange}), indicating the proposed method has both better sample and time complexity.}
\end{figure*}

We evaluate representation learning in the same setting as in previous work.
We hope finding the correct causality direction helps to learn causal representations.
We set the correct causal direction as $A \rightarrow B$.
With a sample $(A, B)$, a fixed decoder $\mathcal{D}=R(\theta_\mathcal{D})$ converts it to an observation $(X, Y)$, where
$R$ is a rotation matrix. 
We then use both approaches to learn encoders $\mathcal{E}=R(\theta_\mathcal{E})$ that map $(X, Y)$ to representation $(U, V)$.
We set decoder parameter $\theta_\mathcal{D}=-\pi/4$, so that the expected encoder parameter should be $\pi/4$ or $-\pi/4$.

For both approaches, we use the same experiment setting and hyperparameters. In each iteration, the baseline approach uses transfer samples to adapt the model by learning gradient descent for several steps and obtain regret to update the encoder. On the other hand, we directly use loss instead of analyzing with a statistical test because this can be neurally integrated with neural network training for disentangled representation learning.

 
We plot the relation of $\theta$ values along with the number of iterations and time to compare sample and time efficiency.
The result for sample efficiency is in Figure~\ref{fig:representation_samples}.
The proposed approach (\textcolor{darkgreen}{green}) reaches near $\pi/4$ after around 220 iterations.
Baseline approach (\textcolor{orange}{orange}) reaches near $\pi/4$ after around 410 iterations.
The proposed approach is around 1.9 times sample efficient compared with the baseline approach.
The result for time efficiency is in Figure~\ref{fig:representation_time}.
The proposed approach (\textcolor{darkgreen}{green}) reaches near $\pi/4$ in around 34 seconds.
Baseline approach (\textcolor{orange}{orange}) reaches near $\pi/4$ in around 319 seconds.
The proposed approach is 9.4 times quicker than the baseline approach.

The results show that the proposed approach is both sample and time-efficient for the representation learning task. Especially, the time efficiency shows the proposed method's advantage that it does not need multiple steps of adaptation in each iteration. It even does not need to compute gradient with backpropagation or update model parameters.

\section{Discussions}
To further understand the proposed approach, we discuss alternative metrics for distribution differences, the proposed approach's working conditions, and the baseline.
We also extend experiments on real data and neural network models.

\paragraph{Alternative Metrics}
The proposed approach may even work when using other metrics that are easily accessed in standard machine learning workflows. Many loss functions share properties of distance metrics and smoothness conditions that make them helpful in comparing the generalization ability of different models. Such metrics may be biased but can also be used to assess causality direction. We describe and experiment with metrics such as the KL-divergence loss and the gradient norm in Appendix~\ref{sec:other_experiment}. Empirical tests show that both alternative metrics can identify causality direction in the prediction task. One may choose these alternative metrics depending on the workflow.

\paragraph{Robustness}
We compare the proposed and the baseline approach on the working conditions.
One potential problem of the baseline approach is the dependency on marginal distributions. This not only requires more samples and computation, but when the marginal distribution for the correct factorization $P(A)$ is complicated, it may produce incorrect causality predictions (suppose $A \rightarrow B$ is the correct causality direction).
We show an example of such a case in Appendix~\ref{sec:robustness}.
We design a marginal distribution of cause variable $P(A)$ with a complicated hidden structure, which makes it harder to adapt after intervention than the marginal distribution of effect variable $P(B)$, and conditional distribution $P(A | B)$.
We use a mixture model for $P(A)$ with $K$ dimensional hidden variables $C$ and do not change models for other distributions.
The result shows that the proposed approach is more robust than the baseline one in this case.

\paragraph{Broader Situations and Real Data}
In addition to comparing the proposed approach with the previous work in the same setting to understand the fundamental mechanism for causality learning, we also evaluate whether the proposed method extends naturally to broader situations.
We use real data of temperature and altitude~\citep{mooij2016distinguishing}.
We scale the data to have unit variance.
We apply the proposed approach with both linear models and neural networks.
We also add noise to the data and observe the changes in performance.
The details can be found in Appendix~\ref{sec:real_data_experiment}.
The results show that the proposed approach works for the real data and applies to both linear models and neural networks.
It resists some level of noise (standard deviation $\le 1$), but not when the noise is too large.
\section{Related Work}

Our work is closely related to causal reasoning, disentangled representation learning, and its application to domain adaptation and transfer learning.

\paragraph{Causal Reasoning}
There is rich literature on causal reasoning based on the pre-conceived formal theory since Pearl's seminal work on do-calculus~\citep{pearl1995causal,pearl2009causality}.
The major approach is structure learning in Bayesian networks.
Discrete search and simulated annealing are reviewed in \cite{heckerman1995learning}.
Minimum Description Length (MDL) principles are commonly used to score and search~\citep{lam1993using,friedman1998learning}.
Bayesian Information Criterion (BIC) is also used to search when models have high relative posterior probability~\citep{heckerman1995learning}.

Prior work uses observation but does not consider interventions and focuses on likelihood learning or hypothesis equivalence classes~\citep{heckerman1995learning}.
Later, approaches have also been proposed to infer the causal direction only from observation~\citep{peters2017elements}, based on not generally robust assumptions on the underlying causal graph.
The impact of interventions on probabilistic graphical models is also introduced in~\citep{bareinboim2016causal}.
Another early work~\citep{peters2016causal} uses intervention to detect causal direction, but it does not address representation learning, which is an important problem in the modern neural network and the main topic of this paper. Further, our method should be more efficient than the fast approximation method~\citep{peters2016causal}, which takes iterations to fit a model.
In contrast, we only run one forward pass on a fixed model.

Recently, a meta-learning approach is adopted to draw causal inferences from purely observational data.
\cite{dasgupta2019causal} focuses on training an agent to learn causal reasoning by meta reinforcement learning; \cite{bengio2020a} focuses on predicting causal structure based on how fast a learner adapts to new distributions. In contrast, we propose an efficient approach for disentangling causal mechanisms based on the difference of conditional probabilities in original and new distributions, assuming causal mechanisms hold in both distributions.
More recently, generative models have been augmented with causal capabilities to learn to generate images and also to plan~\citep{kocaoglu2018causalgan,kurutach2018learning}.

\paragraph{Disentangled Representation Learning}
This work discovers the underlying causal variables and their dependencies.
This is related to disentangling variables~\citep{bengio2013representation} and disentangled representation learning~\citep{higgins2017beta,higgins2018towards}.
Some work points out that assumptions, including priors or biases, should be necessary to discover the underlying explanatory variables~\citep{bengio2013representation, locatello2018challenging}.
Another work~\citep{locatello2018challenging} reviews and evaluates disentangling and discusses different metrics.

A strong assumption of disentangling is that the underlying explanatory variables are marginally independent of each other. Many deep generative models~\citep{Goodfellow-et-al-2016} and independent component analysis models~\citep{hyvarinen2004independent,higgins2017beta,hyvarinen2018nonlinear} are built on this assumption.
However, in realistic cases, this assumption is often not likely to hold.
Some recent work addresses compositionality learning~\citep{li2019compositional,li2020compositional} without such assumption, but they do not address how to use causal mechanisms for representation learning.

\paragraph{Domain Adaptation and Transfer Learning}
Previous work finds a subset of features to best predict a variable of interest~\citep{magliacane2018domain}.
However, the work is about feature selection, and our work is about causality direction and representation learning.

Other work also examines incorrect inference and turns the problem into a domain adaptation problem~\citep{johansson2016learning}.
Counterfactual regression is proposed to estimate each effect from observation~\citep{shalit2017estimating}.
Another approach is to find a subset that makes the target independent from the variable selection~\citep{rojas2018invariant}.
To do that, they assume the invariance of conditional distribution in train and target domains.

Also, competition may be introduced to recover a set of independent causal mechanisms, which leads to specialization~\citep{parascandolo2017learning}.

\section{Conclusion}
In this paper, we propose an efficient simplification of a technique to disentangle causal representations.
Unlike the baseline approach, which requires significant adaptation time and sophisticated comparisons of the models' adaptation speed, 
the proposed approach directly measures generalization ability based on the divergence of conditional probabilities from the original to transfer data distributions. We provide a theoretical explanation for the advantage of the proposed method, and
our experiments demonstrate that the technique is significantly more efficient than the approach based on adaptation speed in causality direction prediction and representation learning tasks.
We hope this work in causal representation learning will be helpful for more advanced artificial intelligence.



\bibliography{iclr2021_conference}

\begin{thebibliography}{46}
\providecommand{\natexlab}[1]{#1}
\providecommand{\url}[1]{\texttt{#1}}
\expandafter\ifx\csname urlstyle\endcsname\relax
  \providecommand{\doi}[1]{doi: #1}\else
  \providecommand{\doi}{doi: \begingroup \urlstyle{rm}\Url}\fi

\bibitem[Rubin and Waterman(2006)]{rubin2006estimating}
Donald~B Rubin and Richard~P Waterman.
\newblock Estimating the causal effects of marketing interventions using
  propensity score methodology.
\newblock \emph{Statistical Science}, pages 206--222, 2006.

\bibitem[Ramsey et~al.(2010)Ramsey, Hanson, Hanson, Halchenko, Poldrack, and
  Glymour]{ramsey2010six}
Joseph~D Ramsey, Stephen~Jos{\'e} Hanson, Catherine Hanson, Yaroslav~O
  Halchenko, Russell~A Poldrack, and Clark Glymour.
\newblock Six problems for causal inference from fmri.
\newblock \emph{neuroimage}, 49\penalty0 (2):\penalty0 1545--1558, 2010.

\bibitem[Rotmensch et~al.(2017)Rotmensch, Halpern, Tlimat, Horng, and
  Sontag]{rotmensch2017learning}
Maya Rotmensch, Yoni Halpern, Abdulhakim Tlimat, Steven Horng, and David
  Sontag.
\newblock Learning a health knowledge graph from electronic medical records.
\newblock \emph{Scientific reports}, 7\penalty0 (1):\penalty0 5994, 2017.

\bibitem[Sch{\"o}lkopf et~al.(2021)Sch{\"o}lkopf, Locatello, Bauer, Ke,
  Kalchbrenner, Goyal, and Bengio]{scholkopf2021toward}
Bernhard Sch{\"o}lkopf, Francesco Locatello, Stefan Bauer, Nan~Rosemary Ke, Nal
  Kalchbrenner, Anirudh Goyal, and Yoshua Bengio.
\newblock Toward causal representation learning.
\newblock \emph{Proceedings of the IEEE}, 109\penalty0 (5):\penalty0 612--634,
  2021.

\bibitem[Hume(2003)]{hume2003treatise}
David Hume.
\newblock \emph{A treatise of human nature}.
\newblock Courier Corporation, 2003.

\bibitem[Pearl(2003)]{pearl2003causality}
Judea Pearl.
\newblock \emph{CAUSALITY: MODELS, REASONING, AND INFERENCE}.
\newblock Cambridge university press, 2003.

\bibitem[Pearl(2009)]{pearl2009causality}
Judea Pearl.
\newblock \emph{Causality}.
\newblock Cambridge university press, 2009.

\bibitem[Peters et~al.(2016)Peters, B{\"u}hlmann, and
  Meinshausen]{peters2016causal}
Jonas Peters, Peter B{\"u}hlmann, and Nicolai Meinshausen.
\newblock Causal inference by using invariant prediction: identification and
  confidence intervals.
\newblock \emph{Journal of the Royal Statistical Society: Series B (Statistical
  Methodology)}, 78\penalty0 (5):\penalty0 947--1012, 2016.

\bibitem[Greenland et~al.(1999)Greenland, Pearl, Robins,
  et~al.]{greenland1999causal}
Sander Greenland, Judea Pearl, James~M Robins, et~al.
\newblock Causal diagrams for epidemiologic research.
\newblock \emph{Epidemiology}, 10:\penalty0 37--48, 1999.

\bibitem[Pearl(2018)]{pearl2018does}
Judea Pearl.
\newblock Does obesity shorten life? or is it the soda? on non-manipulable
  causes.
\newblock \emph{Journal of Causal Inference}, 6\penalty0 (2), 2018.

\bibitem[Welling(2015)]{welling2015ml}
Max Welling.
\newblock Are ml and statistics complementary?
\newblock In \emph{IMS-ISBA Meeting on ‘Data Science in the Next 50 Years},
  2015.

\bibitem[Mnih et~al.(2015)Mnih, Kavukcuoglu, Silver, Rusu, Veness, Bellemare,
  Graves, Riedmiller, Fidjeland, Ostrovski, et~al.]{mnih2015human}
Volodymyr Mnih, Koray Kavukcuoglu, David Silver, Andrei~A Rusu, Joel Veness,
  Marc~G Bellemare, Alex Graves, Martin Riedmiller, Andreas~K Fidjeland, Georg
  Ostrovski, et~al.
\newblock Human-level control through deep reinforcement learning.
\newblock \emph{Nature}, 518\penalty0 (7540):\penalty0 529, 2015.

\bibitem[Silver et~al.(2017)Silver, Schrittwieser, Simonyan, Antonoglou, Huang,
  Guez, Hubert, Baker, Lai, Bolton, et~al.]{silver2017mastering}
David Silver, Julian Schrittwieser, Karen Simonyan, Ioannis Antonoglou, Aja
  Huang, Arthur Guez, Thomas Hubert, Lucas Baker, Matthew Lai, Adrian Bolton,
  et~al.
\newblock Mastering the game of go without human knowledge.
\newblock \emph{Nature}, 550\penalty0 (7676):\penalty0 354, 2017.

\bibitem[Szegedy et~al.(2017)Szegedy, Ioffe, Vanhoucke, and
  Alemi]{szegedy2017inception}
Christian Szegedy, Sergey Ioffe, Vincent Vanhoucke, and Alexander~A Alemi.
\newblock Inception-v4, inception-resnet and the impact of residual connections
  on learning.
\newblock In \emph{Thirty-First AAAI Conference on Artificial Intelligence},
  2017.

\bibitem[Hudson and Manning(2018)]{hudson2018compositional}
Drew~A Hudson and Christopher~D Manning.
\newblock Compositional attention networks for machine reasoning.
\newblock \emph{arXiv preprint arXiv:1803.03067}, 2018.

\bibitem[Bengio et~al.(2020)Bengio, Deleu, Rahaman, Ke, Lachapelle, Bilaniuk,
  Goyal, and Pal]{bengio2020a}
Yoshua Bengio, Tristan Deleu, Nasim Rahaman, Nan~Rosemary Ke, Sebastien
  Lachapelle, Olexa Bilaniuk, Anirudh Goyal, and Christopher Pal.
\newblock A meta-transfer objective for learning to disentangle causal
  mechanisms.
\newblock In \emph{International Conference on Learning Representations}, 2020.
\newblock URL \url{https://openreview.net/forum?id=ryxWIgBFPS}.

\bibitem[Peters et~al.(2017)Peters, Janzing, and
  Sch{\"o}lkopf]{peters2017elements}
Jonas Peters, Dominik Janzing, and Bernhard Sch{\"o}lkopf.
\newblock \emph{Elements of causal inference: foundations and learning
  algorithms}.
\newblock MIT press, 2017.

\bibitem[Mooij et~al.(2016)Mooij, Peters, Janzing, Zscheischler, and
  Sch{\"o}lkopf]{mooij2016distinguishing}
Joris~M Mooij, Jonas Peters, Dominik Janzing, Jakob Zscheischler, and Bernhard
  Sch{\"o}lkopf.
\newblock Distinguishing cause from effect using observational data: methods
  and benchmarks.
\newblock \emph{The Journal of Machine Learning Research}, 17\penalty0
  (1):\penalty0 1103--1204, 2016.

\bibitem[Pearl(1995)]{pearl1995causal}
Judea Pearl.
\newblock Causal diagrams for empirical research.
\newblock \emph{Biometrika}, 82\penalty0 (4):\penalty0 669--688, 1995.

\bibitem[Heckerman et~al.(1995)Heckerman, Geiger, and
  Chickering]{heckerman1995learning}
David Heckerman, Dan Geiger, and David~M Chickering.
\newblock Learning bayesian networks: The combination of knowledge and
  statistical data.
\newblock \emph{Machine learning}, 20\penalty0 (3):\penalty0 197--243, 1995.

\bibitem[Lam and Bacchus(1993)]{lam1993using}
Wai Lam and Fahiem Bacchus.
\newblock Using causal information and local measures to learn bayesian
  networks.
\newblock In \emph{Proceedings of the Ninth international conference on
  Uncertainty in artificial intelligence}, pages 243--250. Morgan Kaufmann
  Publishers Inc., 1993.

\bibitem[Friedman and Goldszmidt(1998)]{friedman1998learning}
Nir Friedman and Moises Goldszmidt.
\newblock Learning bayesian networks with local structure.
\newblock In \emph{Learning in graphical models}, pages 421--459. Springer,
  1998.

\bibitem[Bareinboim and Pearl(2016)]{bareinboim2016causal}
Elias Bareinboim and Judea Pearl.
\newblock Causal inference and the data-fusion problem.
\newblock \emph{Proceedings of the National Academy of Sciences}, 113\penalty0
  (27):\penalty0 7345--7352, 2016.

\bibitem[Dasgupta et~al.(2019)Dasgupta, Wang, Chiappa, Mitrovic, Ortega,
  Raposo, Hughes, Battaglia, Botvinick, and Kurth-Nelson]{dasgupta2019causal}
Ishita Dasgupta, Jane Wang, Silvia Chiappa, Jovana Mitrovic, Pedro Ortega,
  David Raposo, Edward Hughes, Peter Battaglia, Matthew Botvinick, and Zeb
  Kurth-Nelson.
\newblock Causal reasoning from meta-reinforcement learning.
\newblock \emph{arXiv preprint arXiv:1901.08162}, 2019.

\bibitem[Kocaoglu et~al.(2018)Kocaoglu, Snyder, Dimakis, and
  Vishwanath]{kocaoglu2018causalgan}
Murat Kocaoglu, Christopher Snyder, Alexandros~G. Dimakis, and Sriram
  Vishwanath.
\newblock Causal{GAN}: Learning causal implicit generative models with
  adversarial training.
\newblock In \emph{International Conference on Learning Representations}, 2018.
\newblock URL \url{https://openreview.net/forum?id=BJE-4xW0W}.

\bibitem[Kurutach et~al.(2018)Kurutach, Tamar, Yang, Russell, and
  Abbeel]{kurutach2018learning}
Thanard Kurutach, Aviv Tamar, Ge~Yang, Stuart~J Russell, and Pieter Abbeel.
\newblock Learning plannable representations with causal infogan.
\newblock In \emph{Advances in Neural Information Processing Systems}, pages
  8733--8744, 2018.

\bibitem[Bengio et~al.(2013)Bengio, Courville, and
  Vincent]{bengio2013representation}
Yoshua Bengio, Aaron Courville, and Pascal Vincent.
\newblock Representation learning: A review and new perspectives.
\newblock \emph{IEEE transactions on pattern analysis and machine
  intelligence}, 35\penalty0 (8):\penalty0 1798--1828, 2013.

\bibitem[Higgins et~al.(2017)Higgins, Matthey, Pal, Burgess, Glorot, Botvinick,
  Mohamed, and Lerchner]{higgins2017beta}
Irina Higgins, Loic Matthey, Arka Pal, Christopher Burgess, Xavier Glorot,
  Matthew Botvinick, Shakir Mohamed, and Alexander Lerchner.
\newblock beta-vae: Learning basic visual concepts with a constrained
  variational framework.
\newblock In \emph{International Conference on Learning Representations},
  volume~3, 2017.

\bibitem[Higgins et~al.(2018)Higgins, Amos, Pfau, Racaniere, Matthey, Rezende,
  and Lerchner]{higgins2018towards}
Irina Higgins, David Amos, David Pfau, Sebastien Racaniere, Loic Matthey,
  Danilo Rezende, and Alexander Lerchner.
\newblock Towards a definition of disentangled representations.
\newblock \emph{arXiv preprint arXiv:1812.02230}, 2018.

\bibitem[Locatello et~al.(2018)Locatello, Bauer, Lucic, Gelly, Sch{\"o}lkopf,
  and Bachem]{locatello2018challenging}
Francesco Locatello, Stefan Bauer, Mario Lucic, Sylvain Gelly, Bernhard
  Sch{\"o}lkopf, and Olivier Bachem.
\newblock Challenging common assumptions in the unsupervised learning of
  disentangled representations.
\newblock \emph{arXiv preprint arXiv:1811.12359}, 2018.

\bibitem[Goodfellow et~al.(2016)Goodfellow, Bengio, and
  Courville]{Goodfellow-et-al-2016}
Ian Goodfellow, Yoshua Bengio, and Aaron Courville.
\newblock \emph{Deep Learning}.
\newblock MIT Press, 2016.
\newblock \url{http://www.deeplearningbook.org}.

\bibitem[Hyv{\"a}rinen et~al.(2004)Hyv{\"a}rinen, Karhunen, and
  Oja]{hyvarinen2004independent}
Aapo Hyv{\"a}rinen, Juha Karhunen, and Erkki Oja.
\newblock \emph{Independent component analysis}, volume~46.
\newblock John Wiley \& Sons, 2004.

\bibitem[Hyvarinen et~al.(2018)Hyvarinen, Sasaki, and
  Turner]{hyvarinen2018nonlinear}
Aapo Hyvarinen, Hiroaki Sasaki, and Richard~E Turner.
\newblock Nonlinear ica using auxiliary variables and generalized contrastive
  learning.
\newblock \emph{arXiv preprint arXiv:1805.08651}, 2018.

\bibitem[Li et~al.(2019)Li, Zhao, Wang, and Hestness]{li2019compositional}
Yuanpeng Li, Liang Zhao, Jianyu Wang, and Joel Hestness.
\newblock Compositional generalization for primitive substitutions.
\newblock In \emph{Proceedings of the 2019 Conference on Empirical Methods in
  Natural Language Processing and the 9th International Joint Conference on
  Natural Language Processing (EMNLP-IJCNLP)}, pages 4284--4293, 2019.

\bibitem[Li et~al.(2020)Li, Zhao, Church, and Elhoseiny]{li2020compositional}
Yuanpeng Li, Liang Zhao, Kenneth Church, and Mohamed Elhoseiny.
\newblock Compositional continual language learning.
\newblock In \emph{International Conference on Learning Representations}, 2020.
\newblock URL \url{https://openreview.net/forum?id=rklnDgHtDS}.

\bibitem[Magliacane et~al.(2018)Magliacane, van Ommen, Claassen, Bongers,
  Versteeg, and Mooij]{magliacane2018domain}
Sara Magliacane, Thijs van Ommen, Tom Claassen, Stephan Bongers, Philip
  Versteeg, and Joris~M Mooij.
\newblock Domain adaptation by using causal inference to predict invariant
  conditional distributions.
\newblock In \emph{Advances in Neural Information Processing Systems}, pages
  10846--10856, 2018.

\bibitem[Johansson et~al.(2016)Johansson, Shalit, and
  Sontag]{johansson2016learning}
Fredrik Johansson, Uri Shalit, and David Sontag.
\newblock Learning representations for counterfactual inference.
\newblock In \emph{International conference on machine learning}, pages
  3020--3029, 2016.

\bibitem[Shalit et~al.(2017)Shalit, Johansson, and
  Sontag]{shalit2017estimating}
Uri Shalit, Fredrik~D Johansson, and David Sontag.
\newblock Estimating individual treatment effect: generalization bounds and
  algorithms.
\newblock In \emph{Proceedings of the 34th International Conference on Machine
  Learning-Volume 70}, pages 3076--3085. JMLR. org, 2017.

\bibitem[Rojas-Carulla et~al.(2018)Rojas-Carulla, Sch{\"o}lkopf, Turner, and
  Peters]{rojas2018invariant}
Mateo Rojas-Carulla, Bernhard Sch{\"o}lkopf, Richard Turner, and Jonas Peters.
\newblock Invariant models for causal transfer learning.
\newblock \emph{The Journal of Machine Learning Research}, 19\penalty0
  (1):\penalty0 1309--1342, 2018.

\bibitem[Parascandolo et~al.(2017)Parascandolo, Kilbertus, Rojas-Carulla, and
  Sch{\"o}lkopf]{parascandolo2017learning}
Giambattista Parascandolo, Niki Kilbertus, Mateo Rojas-Carulla, and Bernhard
  Sch{\"o}lkopf.
\newblock Learning independent causal mechanisms.
\newblock \emph{arXiv preprint arXiv:1712.00961}, 2017.

\bibitem[Kuhn and
  Tucker(1951)]{kuhn:nonlinearprogramming:berkeleysymposium:1951}
Harold~W. Kuhn and Albert~W. Tucker.
\newblock Nonlinear programming.
\newblock In \emph{Berkeley Symposium}, pages 481--492. Berkeley: University of
  California Press, 1951.

\bibitem[Nesterov(2004)]{nesterov:convexprogramming:book:2004}
Yurii Nesterov.
\newblock \emph{Introductory Lectures on Convex Optimization}.
\newblock Springer US, 2004.

\bibitem[Yin et~al.(2018)Yin, Pananjady, Lam, Papailiopoulos, Ramchandran, and
  Bartlett]{yin:gradientdiversity:aistats:2018}
Dong Yin, Ashwin Pananjady, Max Lam, Dimitris Papailiopoulos, Kannan
  Ramchandran, and Peter~L. Bartlett.
\newblock Gradient diversity: a key ingredient for scalable distributed
  learning.
\newblock In \emph{International Conference on Artificial Intelligence and
  Statistics (AISTATS)}, 2018.

\bibitem[Bernstein(1946)]{bernstein1946theory}
Sergei Bernstein.
\newblock The theory of probabilities, 1946.

\bibitem[Hoeffding(1963)]{hoeffding:estimatorconverge:jasa:1963}
Wassily Hoeffding.
\newblock Probability inequalities for sums of bounded random variables.
\newblock \emph{Journal of the American Statistical Association}, 58:\penalty0
  13--30, 1963.

\bibitem[Pedregosa et~al.(2011)Pedregosa, Varoquaux, Gramfort, Michel, Thirion,
  Grisel, Blondel, Prettenhofer, Weiss, Dubourg, Vanderplas, Passos,
  Cournapeau, Brucher, Perrot, and Duchesnay]{scikit-learn}
F.~Pedregosa, G.~Varoquaux, A.~Gramfort, V.~Michel, B.~Thirion, O.~Grisel,
  M.~Blondel, P.~Prettenhofer, R.~Weiss, V.~Dubourg, J.~Vanderplas, A.~Passos,
  D.~Cournapeau, M.~Brucher, M.~Perrot, and E.~Duchesnay.
\newblock Scikit-learn: Machine learning in {P}ython.
\newblock \emph{Journal of Machine Learning Research}, 12:\penalty0 2825--2830,
  2011.

\end{thebibliography}

\appendix

\section{Proof that KL-divergence Can Be Used to Identify Causality Direction}
\label{sec:proof_of_kl}
Here, we prove that we could use conditional KL-divergence to detect causality direction.
\begin{prop}
\label{prop:kl_unbiased}
Given two data distributions with the same directed causality (e.g., $A \rightarrow B$) between two random variables, $A$ and $B$, the difference of the KL-divergences of their conditional distributions (i.e., $P(B|A)$) is an unbiased estimator of the correct causality direction.
\end{prop}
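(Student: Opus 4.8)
The plan is to express the proposed quantity as a difference of two conditional KL-divergences and show that the summand associated with the correct direction vanishes identically, so that the sign of the difference is governed entirely by the non-negative wrong-direction term. Writing $P_1$ for the train distribution, $P_2$ for the transfer distribution, and taking $A \to B$ as the ground truth, I would define
\begin{align*}
    K_{A \to B} &= \E_{a \sim P_2(A)}\left[ \KL\big(P_2(B \mid a)\,\big\|\,P_1(B \mid a)\big)\right], & K_{B \to A} &= \E_{b \sim P_2(B)}\left[ \KL\big(P_2(A \mid b)\,\big\|\,P_1(A \mid b)\big)\right],
\end{align*}
and take the estimator to be $K_{B \to A} - K_{A \to B}$, mirroring the sign convention of $\mathcal{S_G}$ so that a positive value indicates $A \to B$.

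First I would invoke the standing invariance assumption: since $A \to B$ is correct, the mechanism $P(B \mid A)$ is unchanged under the intervention, so $P_1(B \mid a) = P_2(B \mid a)$ for every $a$. Substituting this into $K_{A \to B}$ turns each inner divergence into the KL of a distribution from itself, which is zero; hence $K_{A \to B} = 0$ \emph{exactly}. This identity is the crux of the unbiasedness claim: the correct direction contributes no offset whatsoever. The contrast with Proposition~\ref{prop:gen_loss_approx_kl} is instructive here—its generalization gap replaces the KL by a cross-entropy loss, which decomposes as conditional entropy plus KL, and the reweighted conditional-entropy term $\sum_a (P_2(a)-P_1(a))\,H(B\mid A{=}a)$ does not cancel, producing the bias that distinguishes the two propositions.

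Next I would bound the wrong-direction term. By Gibbs' inequality each inner divergence is non-negative, so $K_{B \to A} \ge 0$ and therefore $K_{B \to A} - K_{A \to B} = K_{B \to A} \ge 0$, correctly pointing to $A \to B$. To make the sign informative rather than merely non-negative, I would argue strict positivity using $P_i(A \mid B) = P(B \mid A)\,P_i(A)/P_i(B)$ with $P_i(B) = \sum_a P(B \mid a)P_i(a)$: a genuine intervention changes $P(A)$, and this change propagates to the anti-causal conditional $P(A \mid B)$, forcing at least one inner KL to be strictly positive. The main obstacle I anticipate is twofold. First, I must pin down the precise sense of ``unbiased'' for a quantity whose target is a discrete direction rather than a real parameter; I would formalize it as the correct-direction divergence being identically zero, so that $\E[\text{estimator}]$ inherits its sign from the wrong-direction term alone with no additive correction. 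Second, I must rule out degenerate interventions for which $P(A \mid B)$ happens not to move despite a changed marginal, which I would handle by restricting attention to interventions that perturb $P(A)$ in a manner not absorbed by the normalization $P_i(B)$.
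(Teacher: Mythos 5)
Your proof is correct and follows essentially the same route as the paper's: both exploit the invariance $P_1(B\mid A)=P_2(B\mid A)$ to make the correct-direction conditional KL-divergence vanish exactly, and both conclude from the strict positivity of the anti-causal divergence (Gibbs' inequality, or equivalently cross-entropy exceeding entropy when the distributions differ) that the sign of the difference identifies the direction. Your expectation-of-pointwise-KLs formulation is the same quantity as the paper's cross-entropy-minus-entropy decomposition, and your added care about degenerate interventions and the precise meaning of ``unbiased'' only makes explicit what the paper leaves implicit.
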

\begin{proof}
First, let $P_1$ and $P_2$ be the distributions such that $P_1 \neq P_2$, but their conditional distributions are the same. Let $D_{\text{KL}}(\cdot \Vert \cdot)$ be conditional KL-divergence. $\text{CrossEntropy}_{A \rightarrow B}(P_i,P_j)$ is the conditional cross-entropy, $-\sum_{\forall (a,b)}P_i(a,b)\log P_j(b|a) = \mathbb{E}_{A,B \sim P_i}[- \log P_j(B|A)]$. Let $H_{A\rightarrow B}(\cdot)$ be conditional entropy.

By the Kraft–McMillan theorem, we have
\begin{align*}
    D_{A \rightarrow B} &= D_{\text{KL}}(P_2(B|A) \Vert P_1(B|A)) \\
    &= \mathbb{E}_{A,B \sim P_2} [\log P_2(B|A) - \log P_1(B|A)] \\
    &= \mathbb{E}_{A,B \sim P_2}[- \log P_1(B|A)] - \mathbb{E}_{A,B \sim P_2} [-\log P_2(B|A)] \\
    &= \text{CrossEntropy}_{A \rightarrow B}(P_2, P_1) - H_{A \rightarrow B}(P_2)
\end{align*}
Without loss of generality, suppose $A \rightarrow B$ is the correct causality direction. Since $P_1(B|A) = P_2(B|A)$, we have that $\text{CrossEntropy}_{A \rightarrow B}(P_2, P_1) = H_{A \rightarrow B}(P_2)$ and
\begin{align*}
    D_{A \rightarrow B} &= \text{CrossEntropy}_{A \rightarrow B}(P_2, P_1) - H_{A \rightarrow B}(P_2) \\
    &= H_{A \rightarrow B}(P_2) - H_{A \rightarrow B}(P_2) \\
    &= 0
\end{align*}
Finally, if we define a directionality score, $\mathcal{S}_{ D_{\text{KL}}}$, as the difference of KL-divergence of the two distributions, we can identify the incorrect causal direction by its greater divergence. Namely,
\begin{align*}
    \mathcal{S}_{ D_{\text{KL}}} &= D_{B \rightarrow A} - D_{A \rightarrow B} \\
    &= [\text{CrossEntropy}_{B \rightarrow A}(P_2, P_1) - H_{B \rightarrow A}(P_2)] - 0 \\
    & > 0 \implies A \rightarrow B
\end{align*}
Here we use the property that entropy is less than cross-entropy when the two distributions are not equal.
\end{proof}

\section{Proof that Generalization Loss Closely Approximates Divergence}
\label{sec:proof_of_approximation}
Here, we prove that we can replace conditional KL-divergence with a standard generalization loss calculation to approximate the causal directionality score, $\mathcal{S}_{D_\text{KL}}$, in Proposition~\ref{prop:kl_unbiased}. This approximation introduces bias into the score, but we expect this bias to be slight relative to the difference of conditional KL-divergences of the incorrect causal model. Thus, it can be used as a good estimator of causality direction.

\begin{prop}
\label{prop:gen_loss_approx_kl}
Given two different data distributions, $P_1$ and $P_2$, with the same directed causality between random variables, $A$ and $B$, consider the generalization loss defined as the conditional cross-entropy of a given distribution:
\begin{align*}
    \mathcal{L}_{A \rightarrow B}^{P_i} &= \text{CrossEntropy}_{A \rightarrow B}(P_i, P_1) \\
    &= \mathbb{E}_{A,B \sim P_i}[-\log P_1(B|A)]
\end{align*}
And define the generalization gap as $\mathcal{G}_{\cdot \rightarrow \cdot} = \mathcal{L}_{\cdot \rightarrow \cdot}^{P_2} - \mathcal{L}_{\cdot\rightarrow \cdot}^{P_1}$. Finally, define the modified causality score, $\mathcal{S_G} = \mathcal{G}_{B \rightarrow A} - \mathcal{G}_{A \rightarrow B}$. If $\mathcal{S}_{D_\text{KL}} = D_{B \rightarrow A} - D_{A \rightarrow B}$ as defined in Proposition~\ref{prop:kl_unbiased}, then $\mathcal{S_G} = \mathcal{S}_{D_\text{KL}} - [\Delta H(B) - \Delta H(A)]$. $\mathcal{S_G}$ is a biased estimator of which model has correct causality direction.
\end{prop}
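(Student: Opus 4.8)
The plan is to unfold each generalization gap into a conditional KL-divergence plus a change-in-conditional-entropy term by reusing the cross-entropy decomposition already established in the proof of Proposition~\ref{prop:kl_unbiased}, and then subtract the two gaps so that the divergence terms reassemble into $\mathcal{S}_{D_\text{KL}}$ while the leftover entropy terms are exactly the claimed bias.

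First I would expand $\mathcal{G}_{A \rightarrow B} = \mathcal{L}_{A \rightarrow B}^{P_2} - \mathcal{L}_{A \rightarrow B}^{P_1}$. By definition $\mathcal{L}_{A \rightarrow B}^{P_1} = \text{CrossEntropy}_{A \rightarrow B}(P_1, P_1) = H_{A \rightarrow B}(P_1)$ and $\mathcal{L}_{A \rightarrow B}^{P_2} = \text{CrossEntropy}_{A \rightarrow B}(P_2, P_1)$. The reusable identity from Proposition~\ref{prop:kl_unbiased} is that a conditional KL-divergence splits as $D_{A \rightarrow B} = \text{CrossEntropy}_{A \rightarrow B}(P_2, P_1) - H_{A \rightarrow B}(P_2)$, which holds for \emph{either} factorization since it is merely the definition of KL as cross-entropy minus entropy. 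Substituting gives $\mathcal{G}_{A \rightarrow B} = D_{A \rightarrow B} + [H_{A \rightarrow B}(P_2) - H_{A \rightarrow B}(P_1)]$. Abbreviating the conditional-entropy shift as $\Delta H(B) := H_{A \rightarrow B}(P_2) - H_{A \rightarrow B}(P_1)$, this reads $\mathcal{G}_{A \rightarrow B} = D_{A \rightarrow B} + \Delta H(B)$; running the identical computation on the reverse model yields $\mathcal{G}_{B \rightarrow A} = D_{B \rightarrow A} + \Delta H(A)$ with $\Delta H(A) := H_{B \rightarrow A}(P_2) - H_{B \rightarrow A}(P_1)$.

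Subtracting the two gaps then gives $\mathcal{S_G} = \mathcal{G}_{B \rightarrow A} - \mathcal{G}_{A \rightarrow B} = (D_{B \rightarrow A} - D_{A \rightarrow B}) + (\Delta H(A) - \Delta H(B)) = \mathcal{S}_{D_\text{KL}} - [\Delta H(B) - \Delta H(A)]$, which is the advertised identity. To close the \emph{biased estimator} claim I would recall from Proposition~\ref{prop:kl_unbiased} that under the correct direction the invariance $P_1(B|A) = P_2(B|A)$ forces $D_{A \rightarrow B} = 0$, so $\mathcal{S}_{D_\text{KL}} = D_{B \rightarrow A} > 0$ and $\mathcal{S_G}$ equals this unbiased signal shifted by the deterministic offset $\Delta H(B) - \Delta H(A)$, which is nonzero in general --- hence the estimator is biased rather than unbiased.

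The real content here is interpretive rather than algebraic: the decomposition itself is immediate once Proposition~\ref{prop:kl_unbiased} is in hand, so the main obstacle is arguing that the entropy offset does not corrupt the sign of $\mathcal{S_G}$. I would keep $\Delta H(B) - \Delta H(A)$ symbolic, observe that it vanishes exactly when the two conditional entropies shift equally from $P_1$ to $P_2$, and note that the proposition only asserts the identity together with biasedness --- the stronger empirical claim that the divergence term dominates the bias (so that $\mathcal{S_G} > 0$ still flags $A \rightarrow B$) is deferred to the experiments, matching the paper's remark that the estimator ``always indicates the correct causal direction'' in practice despite being biased.
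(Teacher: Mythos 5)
Your proof is correct and takes essentially the same route as the paper: expand each generalization gap using $\mathcal{L}^{P_1}_{\cdot\to\cdot} = H_{\cdot\to\cdot}(P_1)$ and the identity $D = \text{CrossEntropy} - H$, then subtract so the divergences reassemble into $\mathcal{S}_{D_\text{KL}}$. The only difference is bookkeeping: you leave the bias as a shift in \emph{conditional} entropies, whereas the paper applies the chain rule $H_{A\to B}(P) = H(P(A,B)) - H(P(A))$ so that the joint-entropy shifts cancel and $\Delta H(A), \Delta H(B)$ denote \emph{marginal} entropy changes --- the two readings agree on the difference $\Delta H(B) - \Delta H(A)$, but the marginal form is the one the paper relies on in its informal argument that the bias term is typically small.
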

\begin{proof}
Let $H(\cdot, \cdot)$ be the joint entropy. The generalization gap is
\begin{align*}
    \mathcal{G}_{A \rightarrow B}
    & = \mathcal{L}_{A \rightarrow B}^{P_2} - \mathcal{L}_{A\rightarrow B}^{P_1} \\
    & = \text{CrossEntropy}_{A \rightarrow B}(P_2, P_1) - H_{A \rightarrow B}(P_1) \\
    & = [\text{CrossEntropy}_{A \rightarrow B}(P_2, P_1) - H_{A \rightarrow B}(P_2)] + H_{A \rightarrow B}(P_2) - H_{A \rightarrow B}(P_1) \\
    & = D_{A \rightarrow B} + H_{A \rightarrow B}(P_2) - H_{A \rightarrow B}(P_1) \\
    & = D_{A \rightarrow B} + H(P_2(B|A)) - H(P_1(B|A)) \\
    & = D_{A \rightarrow B} + [H(P_2(A,B)) - H(P_2(A))] - [H(P_1(A,B)) - H(P_1(A))] \\
    & = D_{A \rightarrow B} + [H(P_2(A,B)) - H(P_1(A,B))] - [H(P_2(A)) - H(P_1(A))] \\
    & = D_{A \rightarrow B} + \Delta H(A, B) - \Delta H(A) \\
    \mathcal{G}_{B \rightarrow A} & = D_{B \rightarrow A} + \Delta H(A, B) - \Delta H(B)
\end{align*}
Here $\Delta H(A, B)$, $\Delta H(A)$ and $\Delta H(B)$ are the change in entropy from $P_1$ to $P_2$ for $(A, B)$, $A$ and $B$, respectively. Further, the modified causality score is the difference of these generalization gaps:
\begin{align*}
    \mathcal{S_G} &= \mathcal{G}_{B \rightarrow A} - \mathcal{G}_{A \rightarrow B} \\
    & = [D_{B \rightarrow A} + \Delta H(A, B) - \Delta H(B)] - [D_{A \rightarrow B} + \Delta H(A, B) - \Delta H(A)] \\
    & = [D_{B \rightarrow A} - D_{A \rightarrow B}] - [\Delta H(B) - \Delta H(A)] \\
    & = \mathcal{S}_{D_\text{KL}} - [\Delta H(B) - \Delta H(A)]
\end{align*}
Without loss of generality, suppose $A \rightarrow B$ is the correct causal direction, then $D_{A \rightarrow B} = 0$.
\begin{align*}
    \mathcal{S_G} = D_{B \rightarrow A} - [\Delta H(B) - \Delta H(A)]
\end{align*}
When $\mathcal{S_G} > 0$ (and $[\Delta H(B) - \Delta H(A)]$ is small), the prediction of causality direction is correct.
\end{proof}

Informally, we expect the bias term, $[\Delta H(B) - \Delta H(A)]$, to be small relative to the divergence, $D_{B\rightarrow A}$, in real application settings. If $A\rightarrow B$, then $A$ and $B$ will have some (positive or negative) correlation--they will vary in a correlated way under intervention on $A$. Unless the conditional distribution, $P_1(B|A) = P_2(B|A)$, contains large portions with high entropy, a change in $A$ will likely cause a commensurate change in $B$. Thus, it is likely that $|\Delta H(A)| \approx |\Delta H(B)|$, so it is unlikely that $\Delta H(A)$ and $\Delta H(B)$ are both large and have opposite sign.

Further, in cases where $\Delta H(A)$ is large, but $\Delta H(B)$ is small, it is frequently an indicator that the selected $P_i$ are ``lucky'' (uncommon) or the causality between $A$ and $B$ is weak. Consider the example in which $H(P_1(A))$ is small, but $H(P_1(B))$ is large. This case can occur if $H(P_1(B|A))$ is small, but it is unlikely because the (low-entropy) conditional distribution must impose the disorder on $P_1(B)$. It is easy to see that the reverse situation ($H(P_1(B|A))$ is large) implies that the causality between $A$ and $B$ is weak, such that causality direction prediction may be difficult anyway.

Finally, if we want to relax the assumption that either $A \rightarrow B$ or $B \rightarrow A$ (e.g., a null-hypothesis that $A$ and $B$ are not causally related), we could introduce a significance test to decide whether we can accept the selection of causal direction from either method. Such a significance test would need to be mindful of the magnitude of any biases introduced in the approach.

\section{Generalization Loss is Likely to be More Practical}
\label{sec:gen_more_practical}
Here, we explain how generalization loss is likely to be more practical than gradient-based approaches for identifying a model with the correct causal direction. Both the baseline and proposed approaches rely on the convergence of estimators toward a ground truth model, but gradient-based approaches require much stronger assumptions to guarantee convergence, and they are more susceptible to practical challenges like tuning optimizer hyperparameters.

First, both gradient- and generalization-based approaches have the same asymptotic convergence characteristics, with the prediction error proportional to $\frac{1}{\sqrt{n}}$, where $n$ is the number of steps or samples. Many theorems state that gradient-based approaches can have asymptotic convergence to within an $\varepsilon$ error based on number of gradient descent steps, $k$: $\varepsilon = O(\frac{1}{\sqrt{k}})$. These proofs rely on many assumptions that cannot be guaranteed in real-world datasets, such as convexity, regularity, or smoothness of error surface, using optimal settings of optimization parameters for best-case step sizes, etc.~\citep{kuhn:nonlinearprogramming:berkeleysymposium:1951,nesterov:convexprogramming:book:2004,yin:gradientdiversity:aistats:2018}. These bounds can also tend to be loose for real data distributions.

On the other hand, non-parametric estimators, such as the sample average generalization loss proposed in this work, do not have a model, so their convergence rates can depend only on the structure of the dataset (e.g., first or second moments). For instance, early estimator convergence results
show that after summarizing $n$ samples, the error in an estimator can be within $\varepsilon$ of the true value: $\varepsilon = O(\frac{1}{\sqrt{n}})$~\citep{bernstein1946theory,hoeffding:estimatorconverge:jasa:1963}. The constants in these bounds are just the moments that characterize the dataset (e.g., mean, variance). Like the gradient-based convergence rates, these non-parametric convergence rates can also be loose for real data distributions.

Thus, in practical settings, it is likely that a generalization-based approach (non-parametric estimators) will converge more quickly on causality identification in real-world datasets. The proposed approach does not require further model optimization (fine-tuning), so it is not susceptible to poorly chosen hyperparameters. Further, it is unlikely to encounter complex error surface topology. Thus, in practice, it is likely to converge more quickly than gradient-based causality identification techniques.



\section{Representation Learning}
\label{sec:online}
We run representation learning experiments for both approaches on a GeForce GTX TITAN Z with PyTorch. We use the original implementation for the baseline approach and extend it for the proposed approach. The hyperparameters are the same as the original setting~\citep{bengio2020a}.




\section{More Experiments on Causality Direction Predictions}
\label{sec:more_experiments}

\begin{figure*}[!th]
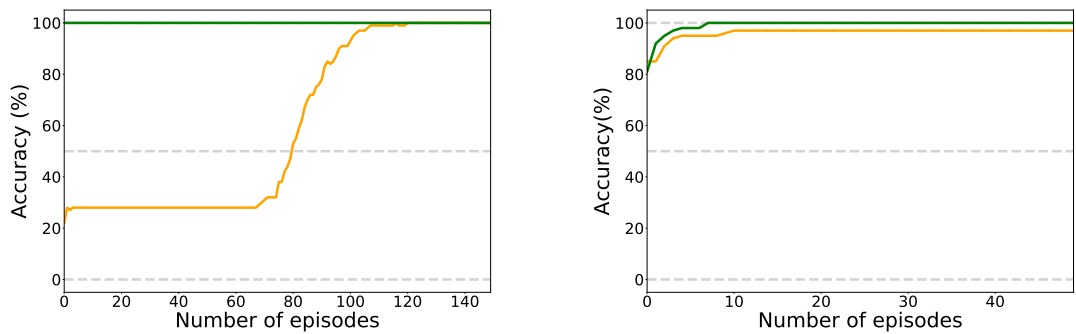

  \centering
  \subfloat[
  $\sigma(\gamma)$  (vertical) on transfer data along with the number of episodes (horizontal).
  ]{
    \includegraphics[width=0.48\textwidth]{figs/direction_acc_N=100.pdf}
    \label{fig:direction_meta_acc_N=100}
  }
  \hfill
  \subfloat[
  Accuracy (vertical) on transfer data with computation time in seconds (horizontal).
  ]{
    \includegraphics[width=0.48\textwidth]{figs/direction_acc_continuous.pdf}
    \label{fig:direction_meta_acc_continuous}
  }
  \caption{Experiments to evaluate efficiency in causality direction prediction. The model is discrete, and both $A$ and $B$ are 100-dimensional variables ($N=100$).}
\end{figure*}

We would like to extend causality direction prediction experiments to high dimensional variables and continuous variables. We still follow the experiment settings in previous work~\citep{bengio2020a} and compare the proposed and the baseline approaches. We run all the experiments of causality direction prediction with Intel Core i9 9920X 12-Core Processors.

For high dimensional variables, we run experiments with $N=100$.
We use the same setting as the experiments with $N=10$ but only changed the $N$.
The result for sample efficiency is in Figure~\ref{fig:direction_meta_acc_N=100}.
Similar to the case of $N=10$, the proposed approach (\textcolor{darkgreen}{green}) is significantly more efficient than the baseline approach (\textcolor{orange}{orange}) in both sample.

For continuous variables, we used the same setting as the previous work~\citep{bengio2020a}.
For the proposed method, we only use one sample for each episode.
The plot shows that the proposed approach (\textcolor{darkgreen}{green}) is more efficient than the baseline approach (\textcolor{orange}{orange}) in sample number (Figure~\ref{fig:direction_meta_acc_continuous}).

\section{Alternative Metrics}
\label{sec:other_experiment}
The proposed approach may even work when using other metrics that are easily accessed in standard machine learning workflows. Many loss functions share properties of distance metrics and smoothness conditions that make them helpful in comparing the generalization ability of different models. Such metrics may be biased but can also be used to assess causality direction.
We describe and experiment with metrics such as the KL-divergence loss and the gradient norm.
Empirical tests show that both alternative metrics can identify causality direction in the prediction task. One may choose these alternative metrics depending on one's workflow.

For KL-divergence, we define the metric as follows and show an example result in Figure~\ref{fig:transfer_kl}.
\begin{align*}
    \mathcal{S}_{D_\text{KL}} &= D_\text{KL}[P_2(B|A) \Vert P_1(B|A)] - D_\text{KL}[P_2(A|B) \Vert P_1(A|B)]
\end{align*}
For gradient $L_2$ norm, we define the metric as the follows, and we show an example result in Figure~\ref{fig:transfer_gr}.
\begin{align*}
     \mathcal{S}_{L_2} = L_2\Big(\frac{\partial \mathcal{L}^\text{transfer}_{A \rightarrow B}} {\partial \theta_{A \rightarrow B}}\Big) - L_2\Big(\frac{\partial \mathcal{L}^\text{transfer}_{B \rightarrow A}} {\partial \theta_{B \rightarrow A}}\Big)
\end{align*}
Note that the model or optimization algorithm does not have special requirements on the gradient, such as gradient clipping.

These results indicate that other loss functions can behave similarly to the conditional cross-entropy when identifying causality direction. The initial samples already show a large gap between the correct and incorrect causal models, and after just a few tens of examples, the difference is close to convergence.

We expect that the same intuition applies to these other loss metrics. In particular, these loss metrics both satisfy distance metric properties of non-negativity and the triangle inequality. Further, they have continuity and smoothness characteristics that suggest any biases they may have should behave like conditional cross-entropy. Our empirical results confirm these intuitions.

\begin{figure*}[!ht]
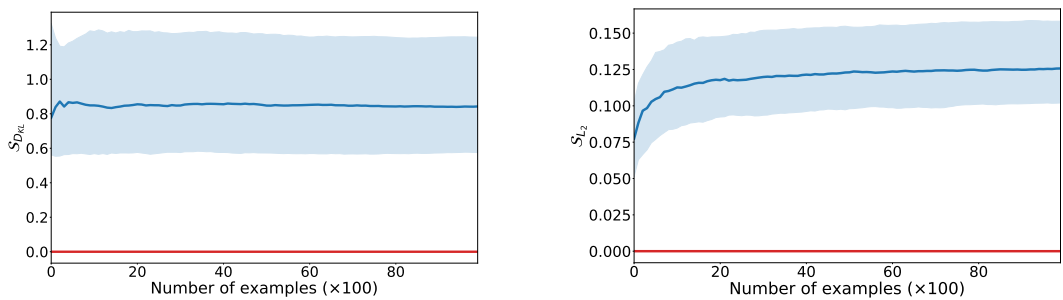

  \centering
  \subfloat[
  Difference of KL-divergence $\mathcal{S}_{D_\text{KL}}$ along with the number of transfer samples in the proposed approach.
  ]{
    \includegraphics[width=0.48\textwidth]{figs/transfer_kl_mle.pdf}
    \label{fig:transfer_kl}
  }
  \hfill
  \subfloat[
  Difference of $L_2$ gradient norm $\mathcal{S}_{L_2}$ along with the number of transfer samples in the proposed approach.
  ]{
    \includegraphics[width=0.48\textwidth]{figs/transfer_gr.pdf}
    \label{fig:transfer_gr}
  }
  \caption{
  Experiments for other metrics.
  The model is discrete and $N=10$.
  Curve (\textcolor{blue}{blue}) is median over 100 runs, with 25-75\% quantiles intervals, and it is significantly above zero (\textcolor{red}{red}), indicating these metrics are good indicators for causality learning.
  }
\end{figure*}

\section{Robustness}
\label{sec:robustness}

To compare the robustness of the proposed approach with the baseline, we design $P(A)$ with a complicated hidden structure so that it is harder to adapt after intervention than $P(B)$ and $P(A | B)$.
For $P(A)$, we use a mixture model with $K$ dimensional hidden variables $C$ and keep other distributions unchanged.
\begin{align*}
    P(A; \theta, \phi) = \sum^{K}_{k=1}P(A|C_k;\phi)P(C_k;\theta)
\end{align*}
where $\theta \in \mathbb{R}^{K}, \phi \in \mathbb{R}^{K \times N}$.
We set $N=10, M=10, K=200$.
For the baseline approach, we plot the improvement of log-likelihood during adaptation along with the number of samples ($\times 100$) in Figure~\ref{fig:baseline_counter_example}.
The model with correct causality direction (\textcolor{blue}{blue}) adapts slower than that of incorrect direction (\textcolor{red}{red}), which means the fundamental intuition of baseline approach does not apply in this situation.
For the proposed approach, we plot the difference of generalization losses $\mathcal{S_G} = \mathcal{G}_{B \rightarrow A} - \mathcal{G}_{A \rightarrow B}$ along with the number of samples ($\times 100$) in Figure~\ref{fig:proposed_counter_example}.
The difference (\textcolor{blue}{blue}) is significantly larger than zero (\textcolor{red}{red}).
This is natural because the proposed approach does not use $P(A)$ so that it is not influenced by whether $P(A)$ has a complicated hidden structure.
This experiment shows that
the proposed approach is more robust than the baseline one in this case.

\begin{figure*}[!ht]
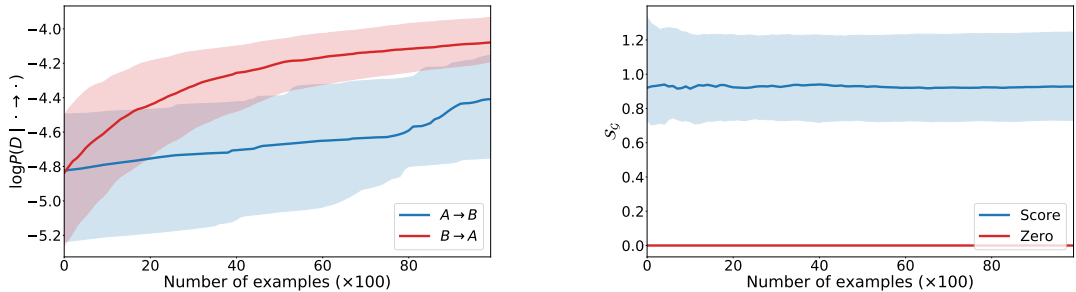

  \centering
  \subfloat[
  Incorrect result from baseline approach.
  Log-likelihood during adaptation along with the number of samples ($\times 100$).
  The correct causal model (\textcolor{blue}{blue}) adapts more slowly than that of the incorrect model (\textcolor{red}{red}).
  ]{
    \includegraphics[width=0.48\textwidth]{figs/baseline_counter_example.pdf}
    \label{fig:baseline_counter_example}
  }
  \hfill
  \subfloat[
  Result of the proposed approach.
  The difference of generalization losses $\mathcal{S_G} = \mathcal{G}_{B \rightarrow A} - \mathcal{G}_{A \rightarrow B}$ along with the number of samples ($\times 100$). The difference (\textcolor{blue}{blue}) is significantly larger than zero (\textcolor{red}{red}).
  ]{
    \includegraphics[width=0.48\textwidth]{figs/proposed_counter_example.pdf}
    \label{fig:proposed_counter_example}
  }
  \caption{
  Experiments to evaluate robustness.
  $A \rightarrow B$ is the correct causality direction.
  Here, $N=10, M=10, K=200$.
  Curves are median over 100 runs, with 25-75\% quantiles intervals. 
  The result shows that the models with correct causalities are slow to adapt in the baseline approach (a), which means the baseline approach does not work, but the proposed approach (b) works.
  }
\end{figure*}

\section{Real Data Experiments}
\label{sec:real_data_experiment}

\begin{figure*}[!th]
  \centering
  \includegraphics[width=0.48\textwidth]{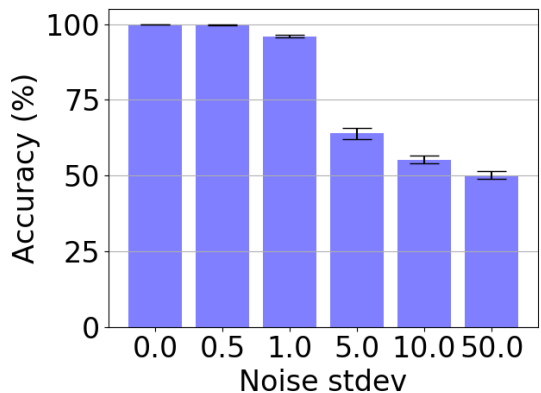}
  \caption{Accuracy when adding noise.}
  \label{fig:altitude_noise_stdev}
\end{figure*}

We use the altitude	and temperature dataset~\citep{mooij2016distinguishing}, where altitude ($A$) causes temperature ($B$).
We generate the train and the transfer datasets in the following way.
We first divide the data into two sets, $D_1$ and $D_2$, according to their $A$ values. Samples in $D_1$ have larger or equal $A$ values than those in $D_2$.
We respectively shuffle and separate $D_1$ and $D_2$ to two subsets with ratio of 9:1, generating $D^1_1, D^2_1, D^1_2, D^2_2$.
We then merge sets to generate train data ($D^1_1 \cup D^2_2$) and transfer data ($D^2_1 \cup D^1_2$).

We use both a linear regression model with $L_2$ loss and a neural network model with two layers.
The hidden layer has 100 nodes with ReLU activation.
We use scikit-learn~\citep{scikit-learn} for implementation.
We run experiments 1,000 times with different random seeds, and the predicted causal directions are always correct.


For different noise levels, we use Gaussian noise with different standard deviations.
In the altitude-temperature experiment with the linear model, the success rate starts to drop to $99.8 \pm 0.1\%$ at 0.5 and becomes $96.0 \pm 0.4\%$ at 1.0, $63.9 \pm 1.8\%$ at 5.0, $55.3 \pm 1.2\%$ at 10.0, and $50.1 \pm 1.3\%$ at 50.0 (Figure~\ref{fig:altitude_noise_stdev}). This means the method is robust to some noise (around 1 or less) but does not work when the noise is too large.





\section{Robustness and edge cases}
We look into more details of the robustness and the edge cases for the difference in the generalization loss gap as a predictor.
We first discuss the relation between the difference in KL divergence $\mathcal{S}_{D_{KL}}$ and the difference in loss gap $\mathcal{S}_\mathcal{G}$.
We also discuss the relation between intervention and $\mathcal{S}_\mathcal{G}$.
We then give a counterexample based on the analysis.

\begin{figure*}[!th]
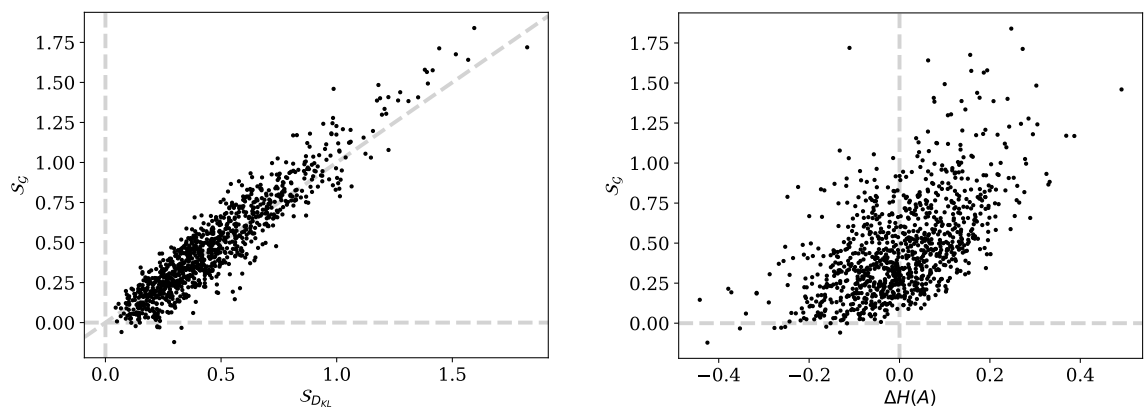

  \centering
  \subfloat[
      The relation between $\mathcal{S}_{D_{KL}}$ and $\mathcal{S}_\mathcal{G}$.
      They are close to each other.
      When $\mathcal{S}_\mathcal{G}$ is negative, it is close to zero.
  ]{
    \includegraphics[width=0.47\textwidth]{figs/D_kl.pdf}
    \label{fig:kdl}
  }
  \hfill
  \subfloat[
      The relation between $\mathcal{S}_{D_{KL}}$ and $\Delta H(A)$.
      $\mathcal{S}_{D_{KL}}$ can be negative when $\Delta H(A)$ is negative.
  ]{
    \includegraphics[width=0.47\textwidth]{figs/dHa.pdf}
    \label{fig:ha}
  }
  \caption{Experiments for the relation between statistics.
  }
\end{figure*}

\paragraph{Estimator}
We look at the relation between the difference in generalization loss gap $\mathcal{S}_\mathcal{G}$ (proposed estimator) and the difference in KL divergence $\mathcal{S}_{D_{KL}}$.
We assume $A \rightarrow B$, so $\mathcal{S}_{D_{KL}}$ is positive by Proposition~\ref{prop:kl_unbiased}.
The estimation is correct if $\mathcal{S}_\mathcal{G}$ is also positive.
Figure~\ref{fig:kdl} shows that $\mathcal{S}_{D_{KL}}$ and $\mathcal{S}_\mathcal{G}$ are close to each other, which means $\mathcal{S}_\mathcal{G}$ is a reasonable estimator.
Also, when $\mathcal{S}_\mathcal{G}$ is negative, it is close to zero.
It indicates that the estimation $\mathcal{S}_\mathcal{G}$ is reliable when large.
So, a possible extension is to use a threshold to filter cases where $\mathcal{S}_\mathcal{G}$ are close to zero.
Such cases also have small $\mathcal{S}_{D_{KL}}$, so we can understand it in the way that the cases are filtered because they are essentially hard to predict.

\paragraph{Intervention}
We also like to discuss the relation between intervention and $\mathcal{S}_\mathcal{G}$.
Figure~\ref{fig:ha} shows $\mathcal{S}_\mathcal{G}$ can be negative only when $H(A)$ is negative.
This means the prediction may not be robust when the intervention reduces the amount of information in the cause variable.



\paragraph{Edge example}
Based on the previous analysis, we show an example that $\mathcal{S}_\mathcal{G}$ is negative.
The intervention reduces the entropy of the cause variable.
$\mathcal{S}_\mathcal{G} = -0.086$
\begin{align*}
    P_1(A) = \begin{bmatrix} 0.4 \\ 0.6 \end{bmatrix}
    && P_2(A) = \begin{bmatrix} 0.2 \\ 0.8 \end{bmatrix}
    && P(B|A = 0) = \begin{bmatrix} 0.3 \\ 0.7 \end{bmatrix}
    && P(B|A = 1) = \begin{bmatrix} 0.6 \\ 0.4 \end{bmatrix}
\end{align*}




\paragraph{Details of the experiment}
For each experiment, we generate the distributions in the following way.
We first generate the invariant conditional distribution $P(B | A)$.
We then generate two marginal distributions, $P_1(A)$ for training and $P_2(A)$ for transfer distribution.
The joint distributions $P_2(A,B)$ and $P_2(A,B)$ are computed from the conditional and the marginal distributions.
Each distribution is generated by drawing a number uniformly at random between 0 and 1 for each possible value and normalizing them.

\end{document}